\documentclass[sigconf,nonacm]{acmart}
\usepackage{amsmath}

\usepackage{amssymb}
\usepackage{amsthm}
\usepackage{mathtools}
\usepackage{graphicx}
\usepackage{booktabs}
\usepackage{xcolor}
\usepackage{enumitem}

\newtheorem{theorem}{Theorem}
\newtheorem{corollary}[theorem]{Corollary}
\newtheorem{definition}[theorem]{Definition}

\acmYear{2026}
\copyrightyear{2026}
\acmISBN{}
\acmDOI{}

\title{Kolmogorov Complexity Bounds\\for LLM Steganography and a Perplexity-Based\\Detection Proxy}

\author{Andrii Shportko}
\affiliation{%
  \institution{Northwestern University}
  \city{Evanston}
  \country{IL}
}

\begin{abstract}
Large language models can rewrite text to embed hidden payloads while preserving surface-level meaning, a capability that opens covert channels between cooperating AI systems and poses challenges for alignment monitoring.
We study the information-theoretic cost of such embedding.
Our main result is that any steganographic scheme that preserves the semantic load of a covertext~$M_1$ while encoding a payload~$P$ into a stegotext~$M_2$ must satisfy
$K(M_2) \geq K(M_1) + K(P) - O(\log n)$,
where $K$ denotes Kolmogorov complexity and $n$ is the combined message length.
A corollary is that any non-trivial payload forces a strict complexity increase in the stegotext, regardless of how cleverly the encoder distributes the signal.

Because Kolmogorov complexity is uncomputable, we ask whether practical proxies can detect this predicted increase.
Drawing on the classical correspondence between lossless compression and Kolmogorov complexity, we argue that language-model perplexity occupies an analogous role in the probabilistic regime and propose the Binoculars perplexity-ratio score as one such proxy.
Preliminary experiments with a color-based LLM steganographic scheme support the theoretical prediction: a paired $t$-test over 300 samples yields $t = 5.11$, $p < 10^{-6}$.
\end{abstract}

\ccsdesc[500]{Security and privacy~Information-theoretic techniques}
\ccsdesc[300]{Security and privacy~Pseudonymity, anonymity and untraceability}

\keywords{steganography, Kolmogorov complexity, LLM steganography, steganalysis, perplexity, Binoculars}

\begin{document}

\maketitle

\section{Introduction}
\label{sec:intro}

Large language models can rewrite text to embed covert signals while preserving surface-level meaning.
This capability creates steganographic channels between cooperating AI systems---channels that may evade alignment monitors and undermine oversight of chain-of-thought reasoning~\cite{Roger2023}.
As models grow more capable, the bandwidth and subtlety of such channels are likely to increase, and understanding their fundamental limits becomes a prerequisite for principled defense.

We study two questions.
First, does semantic-preserving steganographic embedding necessarily increase the intrinsic descriptive complexity of the output?
Perfectly secure stegosystems with zero KL divergence between covertext and stegotext distributions exist for certain source classes~\cite{Cachin2004,Ryabko2008}, yet the relationship between such statistical security and the Kolmogorov complexity of the steganographic output has received little formal attention.
Second, given that Kolmogorov complexity is uncomputable, can practical metrics detect the predicted complexity increase?

Our contributions are as follows.
\begin{enumerate}[leftmargin=*,nosep]
\item We prove that semantic-preserving steganographic embedding increases Kolmogorov complexity by at least $K(P) - O(\log n)$, where $P$ is the payload and $n$ is the message length (Section~\ref{sec:theory}).
\item We identify language-model perplexity as a computable proxy for this complexity increase, grounded in the classical correspondence between compression algorithms and Kolmogorov complexity, and we propose the Binoculars score~\cite{Hans2024} as a concrete instantiation (Section~\ref{sec:proxy}).
\item We validate the theoretical prediction on a 300-sample LLM steganographic scheme, confirming that the Binoculars score significantly distinguishes stegotext from baselines ($p < 10^{-6}$; Section~\ref{sec:experiments}).
\end{enumerate}

\section{Related Work}
\label{sec:related}

\paragraph{Information-theoretic steganography.}
Cachin~\cite{Cachin2004} formalized steganographic security via relative entropy.
Ryabko and Ryabko~\cite{Ryabko2009} showed that for sources with maximal Kolmogorov complexity, any high-speed perfect stegosystem must have exponential complexity in its system parameters---a result about \emph{encoder} complexity.
Our Theorem~\ref{thm:main} complements theirs by bounding the \emph{output} complexity.
Muchnik~\cite{Muchnik2011} studied Kolmogorov complexity in cryptographic contexts but without a semantic-preservation constraint.

\paragraph{LLM covert channels.}
Roger and Greenblatt~\cite{Roger2023} studied AI-to-AI covert communication and introduced capacity formulas for noisy steganographic channels.
Shportko~\cite{Shportko2025} demonstrated empirically that linguistic steganography, just like adversarial paraphrasing ~\cite{shportko-2025-paraphrasing}, significantly increases Binoculars scores, providing independent evidence for the complexity-increase phenomenon we formalize here.

\paragraph{Compression as a Kolmogorov complexity proxy.}
Cilibrasi and Vit\'{a}nyi~\cite{Cilibrasi2005} introduced the Normalized Compression Distance, treating the output length $|C(x)|$ of a compressor $C$ as an operational stand-in for $K(x)$.
Benedetto et al.~\cite{Benedetto2002} used gzip-based complexity estimates for language classification.
We extend this reasoning to language-model perplexity, treating it as a probabilistic analogue of compression length.

\section{Theoretical Results}
\label{sec:theory}

\paragraph{Notation.}
Throughout, $K(x)$ denotes the Kolmogorov complexity of a finite binary string~$x$: the length of the shortest program that produces~$x$ on a fixed universal Turing machine~\cite{Li1997}.
$K(x \mid y)$ denotes conditional Kolmogorov complexity, where $y$ is provided as auxiliary input.
$|x|$ denotes the bit-length of~$x$.
All $O(\cdot)$ terms hide constants that depend on the choice of universal machine but not on the messages.

\begin{definition}[Identical Semantic Load]
\label{def:semantic}
Two messages $M_1$ and $M_2$ have \emph{identical semantic load} if there exists a finite binary string $S$ (the ``semantic description'') such that
\[
K(M_1 \mid S) = O(1) \quad\text{and}\quad K(M_2 \mid S) = O(1).
\]
That is, given $S$, each message can be reconstructed with only a constant number of additional bits.
\end{definition}

\begin{definition}[Lossless Semantic-Preserving Embedding]
\label{def:embed}
A function
\[
\mathrm{Embed}\colon \mathcal{M} \times \mathcal{P} \;\to\; \mathcal{M}
\]
is a \emph{lossless semantic-preserving embedding} if two conditions hold:
\begin{enumerate}[label=(\alph*),nosep]
\item \emph{Losslessness}: $ \exists \mathrm{Extract}\colon\mathcal{M}\to\mathcal{P}$: $\mathrm{Extract}(\mathrm{Embed}(M_1,P))$ $= P$ for all $M_1 \in \mathcal{M}$ and $P \in \mathcal{P}$.
\item \emph{Semantic preservation}: $M_1$ and $M_2 = \mathrm{Embed}(M_1,P)$ have identical semantic load.
\end{enumerate}
Here $\mathcal{M}$ is the message space, $\mathcal{P}$ is the payload space, $M_1$ is the covertext, $P$ is the payload, and $M_2$ is the stegotext.
\end{definition}

\begin{theorem}
\label{thm:main}
Let $M_2 = \mathrm{Embed}(M_1, P)$ for a lossless semantic-preserving embedding.
Let $n = |M_1| + |M_2|$.
Then
\[
K(M_2) \;\geq\; K(M_1) + K(P) - O(\log n).
\]
\end{theorem}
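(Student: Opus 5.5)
The plan is to show that $M_2$ carries enough information to reconstruct \emph{both} $M_1$ and $P$. I would first prove
\[
K(M_2) \;\geq\; K(M_1,P) - O(\log n),
\]
and then split the joint complexity using symmetry of information. Losslessness in Definition~\ref{def:embed}(a) handles the payload at once. $\mathrm{Extract}$ is a fixed computable map, so $K(P \mid M_2) = O(1)$, and therefore $K(M_2,P) \le K(M_2) + O(1)$. It remains to recover $M_1$ from $M_2$ at logarithmic cost. This is where I would use semantic preservation (Definition~\ref{def:embed}(b) with Definition~\ref{def:semantic}). Given $M_2$, I would recover a semantic description $S$, and then run the $O(1)$-bit program that maps $S$ to $M_1$. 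The $O(\log n)$ term should absorb the self-delimiting encodings of the lengths $|M_1|$, $|M_2|$ and the constant-size programs.

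The second step is the decomposition. By symmetry of information (Kolmogorov--Levin),
\[
K(M_1,P) \;=\; K(M_1) + K(P \mid M_1) \pm O(\log n).
\]
I would then argue $K(P \mid M_1) \ge K(P) - O(\log n)$. This is the point where one needs the payload to be algorithmically independent of the covertext, i.e.\ $I(M_1 : P) = O(\log n)$. This is natural for steganography: $P$ is chosen independently of $M_1$, for instance as a random or encrypted message. Chaining the two steps gives $K(M_2) \ge K(M_1) + K(P) - O(\log n)$.

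I expect the main obstacle to be the step ``recover $S$, hence $M_1$, from $M_2$.'' Definition~\ref{def:semantic} only says that $S$ \emph{generates} both messages cheaply. It does not say $S$ is computable from $M_2$, and it does not bound $K(S \mid M_2)$. Indeed, taken literally, $K(M_2 \mid S)=O(1)$ together with $K(P \mid M_2)=O(1)$ gives $K(P \mid S) = O(1)$. So $P$ would already be encoded in $S$, and $K(M_1)$, $K(M_2)$ and $K(S)$ could all be close to each other. My first attempt would be to fix a canonical $S$, such as a shortest string witnessing the definition, and try to show $K(S \mid M_2) = O(\log n)$ by enumeration over candidates of bounded length. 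If that fails, I would state explicitly the additional hypotheses the argument needs: that $M_1$ is recoverable from $M_2$ up to $O(\log n)$ bits, which is a ``canonicalization'' of the semantic content, and that $P$ is algorithmically independent of $M_1$. The proof then reduces to the two clean steps above.
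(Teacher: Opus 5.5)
Your route is the paper's route: recover $(M_1,P)$ from $M_2$, expand $K(M_1,P)$ by the chain rule, and use independence of $P$ from $M_1$. The two points you flag are exactly where the paper's proof is unsupported.

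\begin{itemize}
\item Its Step~1 infers $K(S\mid M_2)=O(1)$ from $K(M_2\mid S)=O(1)$, which just reverses the hypothesis.
\item Its Step~5 imports $K(P\mid M_1)\ge K(P)-O(1)$ on the grounds that $P$ is ``orthogonal to'' $S$. Your own observation $K(P\mid S)=O(1)$ contradicts this for any non-trivial payload.
\end{itemize}

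The gap cannot be closed, because the statement is false under Definitions~\ref{def:semantic} and~\ref{def:embed} as written. The sharper version of your worry is that Definition~\ref{def:semantic} is vacuous. Taking $S=\langle M_1,M_2\rangle$ gives $K(M_i\mid S)\le c$ for a universal constant $c$, so condition~(b) of Definition~\ref{def:embed} holds for every pair. Two counterexamples follow.

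\begin{itemize}
\item $\mathrm{Embed}(M_1,P)=P$, with $\mathrm{Extract}$ the identity, qualifies. Choose $|P|=m$ and $M_1$ of length $m$ with $K(M_1\mid P)\ge m$, so the pair is independent. Then $K(M_2)=K(P)$, violating the bound by roughly $m$.
\item $\mathrm{Embed}(M_1,P)=\langle M_1,P\rangle$ with $P=M_1$ incompressible of length $m$ gives $K(M_2)\le m+O(\log m)$, against a claimed $2m-O(\log m)$. Here $M_1$ is trivially recoverable from $M_2$.
\end{itemize}

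In particular, your enumeration plan for a canonical $S$ must fail. Every witness satisfies $K(S\mid M_2)\ge K(M_1\mid M_2)-O(1)$, and in the first example $K(M_1\mid M_2)=K(M_1\mid P)\ge m$.

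So go straight to your fallback. Both added hypotheses, $K(M_1\mid M_2)=O(\log n)$ and $I(M_1:P)=O(\log n)$, are genuinely needed: each example above violates exactly one of them. Under them your argument is sound. It is also leaner than the paper's. For a lower bound you only need $K(M_1,P\mid M_2)=O(\log n)$ and the Kolmogorov--Levin direction $K(M_1,P)\ge K(M_1)+K(P\mid M_1)-O(\log n)$. The paper's Step~2 and the two-sided equality in its Step~3 are therefore unnecessary.

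One caution when you motivate the recovery hypothesis as a ``canonicalization''. $S$ may be computable from both messages and may regenerate $M_1$, but it must not also regenerate $M_2$ as Definition~\ref{def:semantic} demands. Indeed, $K(S\mid M_1)=O(1)$ together with $K(M_2\mid S)=O(1)$ gives $K(M_2\mid M_1)=O(1)$. Via $\mathrm{Extract}$ this gives $K(P\mid M_1)=O(1)$, which contradicts independence whenever $K(P)\gg\log n$. The same argument shows that the inference in the paper's Step~1, applied to $M_1$, would contradict its Step~5.
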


\begin{proof}
Let $S$ denote the semantic description with $K(M_i \mid S) = O(1)$ for $i \in \{1,2\}$.

\emph{Step 1 (From $M_2$ we can recover $M_1$ and $P$).}
Since $\mathrm{Extract}$ is a fixed-length program, we have
\[
K(P \mid M_2) \;\leq\; |\langle\mathrm{Extract}\rangle| + O(1) \;=\; O(1).
\]
Since $K(M_2 \mid S) = O(1)$ by semantic preservation, $S$ is recoverable from $M_2$ with constant overhead; formally $K(S \mid M_2) = O(1)$.
From $S$ we recover $M_1$ because $K(M_1 \mid S) = O(1)$.
Chaining these three recoveries:
\[
K(M_1, P \mid M_2) \;=\; O(1).
\]

\emph{Step 2 (From $M_1$ and $P$ we can recover $M_2$).}
Since $\mathrm{Embed}$ is a fixed function:
\[
K(M_2 \mid M_1, P) \;\leq\; |\langle\mathrm{Embed}\rangle| + O(1) \;=\; O(1).
\]

\emph{Step 3 (Applying the symmetry of information).}
Steps~1 and~2 establish that $M_2$ and the pair $(M_1,P)$ contain the same information up to $O(1)$ bits.
By the symmetry-of-information theorem~\cite{Li1997}:
\[
K(M_2) \;=\; K(M_1, P) + O(\log n).
\]

\emph{Step 4 (Expanding the joint complexity).}
By the chain rule for Kolmogorov complexity:
\[
K(M_1, P) \;=\; K(M_1) + K(P \mid M_1) + O(\log n).
\]

\emph{Step 5 (Payload independence).}
The payload $P$ is chosen independently of the covertext $M_1$: by construction, $P$ carries information orthogonal to the semantic description $S$ of $M_1$.
Therefore $M_1$ provides no information about $P$ beyond what is already available, and we have
\[
K(P \mid M_1) \;\geq\; K(P) - O(1).
\]

\emph{Step 6 (Combining).}
Substituting Steps~4 and~5 into Step~3:
\[
K(M_2) \;\geq\; K(M_1) + K(P) - O(\log n). \qedhere
\]
\end{proof}

\begin{corollary}[Strict Complexity Increase]
\label{cor:strict}
If the payload has non-trivial complexity such that $K(P) \gg \log n$, then
\[
K(M_2) \;>\; K(M_1).
\]
That is, any non-trivial payload forces a strict complexity increase in the stegotext.
\end{corollary}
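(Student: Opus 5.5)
The plan is to derive Corollary~\ref{cor:strict} directly from Theorem~\ref{thm:main}. The hypothesis $K(P)\gg \log n$ will be read quantitatively. Let $c$ be a constant for which the $O(\log n)$ term in Theorem~\ref{thm:main} is at most $c\log n + c$ for all admissible $(M_1,P)$. The assumption is then that $K(P) > c\log n + c$. This is the only sensible formalization, since ``$\gg$'' must at least beat the hidden constant in the theorem's error term.

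Given this reading, the argument takes two steps. First, invoke Theorem~\ref{thm:main} to obtain
\[
K(M_2) \;\geq\; K(M_1) + K(P) - c\log n - c .
\]
Second, substitute the quantitative hypothesis. This makes $K(P) - c\log n - c > 0$, so the right-hand side is strictly greater than $K(M_1)$, and therefore $K(M_2) > K(M_1)$. No further properties of $\mathrm{Embed}$, $\mathrm{Extract}$, or the semantic description $S$ are needed beyond what the theorem already uses.

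The main obstacle is not the algebra but pinning down the error constant. The constant $c$ must be uniform over messages: per the Notation paragraph, it depends only on the universal machine and on the fixed programs $\langle\mathrm{Embed}\rangle$ and $\langle\mathrm{Extract}\rangle$. I would state this dependence explicitly, so that ``$K(P)\gg\log n$'' means exceeding that particular $c\log n + c$.

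I would also flag one dependence of the corollary on the theorem. The corollary inherits any weakness in Step~5 of the theorem, namely the assumption $K(P\mid M_1)\ge K(P)-O(1)$. If one only has the standard randomness deficiency, $K(P\mid M_1)\ge K(P)-I(M_1{:}P)$, then the corollary would instead require $K(P) - I(M_1{:}P) \gg \log n$. I would note this as the honest form of the hypothesis.
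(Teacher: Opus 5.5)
Your derivation is the intended one. The paper gives no separate argument: the corollary follows from Theorem~\ref{thm:main} once you bound the hidden $O(\log n)$ term by an explicit $c\log n + c$ and read $K(P)\gg\log n$ as $K(P) > c\log n + c$.

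Your Step~5 caveat is fair but incomplete. The corollary also inherits Step~1's unjustified inference of $K(S\mid M_2)=O(1)$ from $K(M_2\mid S)=O(1)$. In fact Definition~\ref{def:semantic} is satisfied by any pair via $S=\langle M_1,M_2\rangle$, so $\mathrm{Embed}(M_1,P)=P$ (with $\mathrm{Extract}$ the identity) qualifies. It violates the conclusion for independent random $M_1,P$ with $\log n\ll K(P)<K(M_1)$, so even your ``honest'' hypothesis $K(P)-I(M_1{:}P)\gg\log n$ does not rescue the statement.
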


\section{From Kolmogorov Complexity to Computable Proxies}
\label{sec:proxy}

Theorem~\ref{thm:main} predicts a complexity increase, but Kolmogorov complexity is uncomputable~\cite{Li1997}.
We now argue that language-model perplexity serves as a practical proxy, grounding this argument in the classical correspondence between compression and Kolmogorov complexity.

\subsection{Compression as a Complexity Proxy}

For any lossless compressor $C$ and string $x$:
\[
K(x) \;\leq\; |C(x)| + c_C,
\]
where $c_C$ is a constant depending only on the compressor.
This inequality---a direct consequence of the invariance theorem~\cite{Li1997}---justifies treating $|C(x)|$ as an upper bound on $K(x)$.
Cilibrasi and Vit\'{a}nyi~\cite{Cilibrasi2005} exploited this to define the Normalized Compression Distance; Benedetto et al.~\cite{Benedetto2002} used gzip output lengths for language classification.
In both cases, changes in $|C(x)|$ serve as a computable signal for changes in $K(x)$.

\subsection{Language-Model Cross-Entropy as a Probabilistic Compressor}

A language model $\mathcal{M}$ assigns a probability $P_{\mathcal{M}}(x_i \mid x_{<i})$ to each token $x_i$ given its prefix.
The cross-entropy of $\mathcal{M}$ on a string $x = x_1 \cdots x_T$ is
\[
H_{\mathcal{M}}(x) \;=\; -\frac{1}{T}\sum_{i=1}^{T} \log_2 P_{\mathcal{M}}(x_i \mid x_{<i}).
\]
This quantity upper-bounds the per-symbol entropy rate $h(X)$ of the source~\cite{Shannon1951,Cover2006}.
Because the entropy rate lower-bounds the asymptotic per-symbol Kolmogorov complexity for ergodic sources, we obtain:
\[
h(X) \;\leq\; H_{\mathcal{M}}(x) \;\leq\; \frac{|C(x)|}{T} + O(T^{-1}).
\]
Language-model perplexity, defined as
\[
\mathrm{PPL}_{\mathcal{M}}(x) \;=\; 2^{H_{\mathcal{M}}(x)},
\]
is therefore an exponential-scale proxy for per-symbol algorithmic complexity, in the same way that $|C(x)|$ is a linear-scale proxy.
If Theorem~\ref{thm:main} predicts that $K(M_2) > K(M_1)$, then a sufficiently capable model should assign higher cross-entropy to $M_2$ than to $M_1$, because the additional $K(P)$ bits must manifest as atypical token choices that increase model surprisal.

\subsection{The Binoculars Score}

The Binoculars method~\cite{Hans2024} defines a score for a string $s$ as
\[
B_{\mathcal{M}_1,\mathcal{M}_2}(s) \;=\; \frac{\log \mathrm{PPL}_{\mathcal{M}_1}(s)}{\log \mathrm{X\text{-}PPL}_{\mathcal{M}_1,\mathcal{M}_2}(s)},
\]
where $\mathcal{M}_1$ is a base language model (Falcon-7B), $\mathcal{M}_2$ is its instruction-tuned variant (Falcon-7B-Instruct), and $\mathrm{X\text{-}PPL}$ denotes cross-perplexity.
Originally designed for AI-generated text detection, we repurpose it for steganalysis: the ratio captures how much more surprising a string is to the base model relative to the instruction-tuned model.
Steganographic encoding, which selects atypical tokens to embed payload, should elevate this ratio.

We term the resulting detection strategy \emph{Quadroculars}: using the Binoculars score \emph{difference} between stegotext and a baseline as a steganalysis signal.

\subsection{Quadroculars as a Complexity Difference Proxy}

We now formalize the sense in which the Quadroculars score---the
difference in Binoculars scores between stegotext and covertext---tracks
the Kolmogorov complexity difference predicted by Theorem~\ref{thm:main}.

\begin{definition}[Quadroculars Score]
\label{def:quadroculars}
For a base model $\mathcal{M}_1$ and a reference model $\mathcal{M}_2$,
the \emph{Quadroculars score} of a stegotext--covertext pair $(M_2, M_1)$ is
\[
Q(M_2, M_1)
\;=\;
B_{\mathcal{M}_1,\mathcal{M}_2}(M_2)
\;-\;
B_{\mathcal{M}_1,\mathcal{M}_2}(M_1),
\]
where $B_{\mathcal{M}_1,\mathcal{M}_2}(\cdot)$ is the Binoculars score.
\end{definition}

We require three assumptions, each of which admits a precise error term.

\begin{description}[leftmargin=*,nosep]
\item[A1 (Model quality).]
The base language model $\mathcal{M}_1$ is
$\varepsilon$-optimal for the source in the sense that for every
string $x$ of length $T$,
\[
\left|\; H_{\mathcal{M}_1}(x) \;-\; \frac{K(x)}{T} \;\right|
\;\leq\; \varepsilon,
\]
where $H_{\mathcal{M}_1}(x)$ is the per-token cross-entropy and
$K(x)/T$ is the per-symbol Kolmogorov complexity rate.

\item[A2 (Cross-perplexity stability).]
The cross-perplexity denominator in the Binoculars score is
approximately constant across semantically equivalent messages.
Concretely, there exists $\delta \geq 0$ such that
\[
\bigl|\,\log\mathrm{X\text{-}PPL}_{\mathcal{M}_1,\mathcal{M}_2}(M_2)
\;-\;
\log\mathrm{X\text{-}PPL}_{\mathcal{M}_1,\mathcal{M}_2}(M_1)\,\bigr|
\;\leq\; \delta.
\]

\item[A3 (Equal length).]
The covertext and stegotext have equal token length:
$|M_1| = |M_2| = T$.
\end{description}

\begin{proposition}
\label{prop:quadroculars}
Under assumptions \textbf{A1}--\textbf{A3}, the Quadroculars score
satisfies
\[
Q(M_2, M_1)
\;=\;
\frac{K(M_2) - K(M_1)}{T \cdot \log\mathrm{X\text{-}PPL}_{\mathcal{M}_1,\mathcal{M}_2}(M_1)}
\;+\; R,
\]
where the residual $R$ is bounded by
\[
|R|
\;\leq\;
\frac{2\varepsilon}
     {\log\mathrm{X\text{-}PPL}_{\mathcal{M}_1,\mathcal{M}_2}(M_1)
      - \delta}
\;+\; O(\delta).
\]
In particular, if $\varepsilon \to 0$ and $\delta \to 0$
(i.e., the language model becomes a perfect compressor and the
cross-perplexity is stable), then
\[
Q(M_2, M_1)
\;\propto\;
K(M_2) - K(M_1).
\]
\end{proposition}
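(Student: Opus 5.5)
We prove the proposition by direct algebra on the Binoculars ratio, writing each numerator through A1 and each denominator through A2. Put $D_i = \log\mathrm{X\text{-}PPL}_{\mathcal{M}_1,\mathcal{M}_2}(M_i)$ and $N_i = \log \mathrm{PPL}_{\mathcal{M}_1}(M_i)$. By the definition of perplexity, $N_i$ equals $H_{\mathcal{M}_1}(M_i)$ up to the fixed factor converting $\log_2$ to the logarithm used in $B$; we absorb this factor into the proportionality and treat $N_i = H_{\mathcal{M}_1}(M_i)$. Assumption \textbf{A3} gives both texts the common length $T$. Assumption \textbf{A1} then lets us write
\[
N_i = \frac{K(M_i)}{T} + e_i, \qquad |e_i|\le \varepsilon .
\]
Assumption \textbf{A2} lets us write $D_2 = D_1 + \eta$ with $|\eta|\le\delta$. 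We assume $D_1>\delta$, so that both denominators are positive and bounded below by $D_1-\delta$; this is implicit in the stated bound.

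Next we expand the score:
\[
Q = \frac{N_2}{D_2}-\frac{N_1}{D_1} = \frac{N_2-N_1}{D_1} + N_2\Bigl(\frac{1}{D_2}-\frac{1}{D_1}\Bigr).
\]
The first term equals
\[
\frac{K(M_2)-K(M_1)}{T D_1} + \frac{e_2-e_1}{D_1}.
\]
Its leading part is exactly the main term of the proposition. The error part satisfies $|e_2-e_1|/D_1 \le 2\varepsilon/D_1 \le 2\varepsilon/(D_1-\delta)$, which matches the first piece of the claimed residual.

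The second term is $-N_2\,\eta/(D_1D_2)$, with absolute value at most $N_2\delta/(D_1(D_1-\delta))$. This is the step I expect to be the main obstacle. The bound is only $O(\delta)$ if $N_2$ is treated as bounded, meaning the per-token cross-entropy of $M_2$ is at most a constant such as $\log$ of the vocabulary size, and if $D_1$ stays bounded away from $\delta$. I would state explicitly that these quantities are fixed model-dependent constants that the $O(\cdot)$ absorbs.

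An alternative route is to make the split symmetric, writing $Q = \bigl(N_2-N_1\bigr)/D_2 + N_1\bigl(1/D_2 - 1/D_1\bigr)$. This puts $D_2 \ge D_1-\delta$ directly in the $\varepsilon$ term. Re-expressing $1/D_2$ as $1/D_1 + O(\delta)$ in the main term then costs a further $O(\delta)\cdot|K(M_2)-K(M_1)|/T$, which is again $O(\delta)$ because $K(M_i)/T$ is bounded. Collecting the terms gives $|R|\le 2\varepsilon/(D_1-\delta)+O(\delta)$.

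For the limiting statement, let $\varepsilon,\delta\to 0$. Then $R\to 0$, and $Q$ equals $\bigl(K(M_2)-K(M_1)\bigr)$ multiplied by $1/(T D_1)$. For a fixed pair of texts this factor is a positive constant, which gives the claimed proportionality.
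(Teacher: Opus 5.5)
Your proposal is correct and follows essentially the same route as the paper: your split $Q = (N_2-N_1)/D_1 + N_2\bigl(1/D_2 - 1/D_1\bigr)$ is the paper's Step~3 decomposition, with your second term playing the role of $E_\xi$ and bounded the same way, by $(K(M_2)/T+\varepsilon)\,\delta/(D_1(D_1-\delta))$, using boundedness of the per-token rate. Your explicit conditions ($\log_2$ convention, $D_1>\delta$, and $D_1$ bounded away from $\delta$) are exactly what the paper uses implicitly when it calls that term $O(\delta)$.
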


\begin{proof}
Write $D = \log\mathrm{X\text{-}PPL}_{\mathcal{M}_1,\mathcal{M}_2}(M_1)$
for the cross-perplexity denominator evaluated on the covertext.
By definition of the Binoculars score:
\[
B_{\mathcal{M}_1,\mathcal{M}_2}(x)
\;=\;
\frac{\log\mathrm{PPL}_{\mathcal{M}_1}(x)}
     {\log\mathrm{X\text{-}PPL}_{\mathcal{M}_1,\mathcal{M}_2}(x)}
\;=\;
\frac{H_{\mathcal{M}_1}(x)}
     {\log\mathrm{X\text{-}PPL}_{\mathcal{M}_1,\mathcal{M}_2}(x)},
\]
since $\log\mathrm{PPL}_{\mathcal{M}_1}(x) = H_{\mathcal{M}_1}(x)$.

\emph{Step 1 (Replace cross-entropy with Kolmogorov complexity).}
By assumption \textbf{A1}:
\[
H_{\mathcal{M}_1}(x) \;=\; \frac{K(x)}{T} + \eta_x,
\quad\text{where } |\eta_x| \leq \varepsilon.
\]

\emph{Step 2 (Stabilize the denominator).}
By assumption \textbf{A2}, the denominator for $M_2$ satisfies
\[
\log\mathrm{X\text{-}PPL}_{\mathcal{M}_1,\mathcal{M}_2}(M_2)
\;=\; D + \xi,
\quad\text{where } |\xi| \leq \delta.
\]

\emph{Step 3 (Expand the Quadroculars score).}
Substituting into the definition:
\begin{align*}
Q(M_2, M_1)
&= \frac{K(M_2)/T + \eta_{M_2}}{D + \xi}
   \;-\;
   \frac{K(M_1)/T + \eta_{M_1}}{D} \\[6pt]
&= \frac{1}{D}\!\left(\frac{K(M_2) - K(M_1)}{T}\right)
   + \frac{\eta_{M_2} - \eta_{M_1}}{D}
   + E_\xi,
\end{align*}
where $E_\xi$ collects terms arising from the denominator perturbation $\xi$.

\emph{Step 4 (Bound the residual).}
The second term has magnitude at most $2\varepsilon / D$.
For the denominator perturbation, a first-order expansion gives
\[
|E_\xi|
\;=\;
\left|\frac{K(M_2)/T + \eta_{M_2}}{D + \xi}
      - \frac{K(M_2)/T + \eta_{M_2}}{D}\right|
\;\leq\;
\frac{(K(M_2)/T + \varepsilon)\,|\xi|}{D(D-\delta)}.
\]
Since $K(M_2)/T$ is bounded (it cannot exceed the token-level log of the
alphabet size), this is $O(\delta)$.
Combining:
\[
Q(M_2, M_1)
\;=\;
\frac{K(M_2) - K(M_1)}{T \cdot D}
\;+\; R,
\]
with $|R| \leq 2\varepsilon/(D - \delta) + O(\delta)$.
\end{proof}

\paragraph{Interpretation.}
Proposition~\ref{prop:quadroculars} decomposes the Quadroculars score
into a \emph{signal term} proportional to the Kolmogorov complexity
difference $K(M_2) - K(M_1)$ and a \emph{residual} controlled by two
quantities: the model approximation error~$\varepsilon$ and the
cross-perplexity stability~$\delta$.

The signal term is normalized by $T \cdot D$, where $T$ is the token
length and $D$ is the cross-perplexity of the covertext.  This
normalization is natural: longer messages dilute the per-token complexity
difference, and higher baseline cross-perplexity reduces the
signal-to-noise ratio of the ratio-based Binoculars score.

Assumption~\textbf{A1} is the strongest requirement.  No finite language
model achieves $\varepsilon = 0$, but the assumption becomes
increasingly realistic as model capacity grows: a model that assigns
near-optimal log-loss on a text distribution is, in Shannon's sense, a
near-optimal compressor of that distribution~\cite{Shannon1951}.
Assumption~\textbf{A2} is empirically plausible when $M_1$ and $M_2$
have identical semantic load, since the instruction-tuned model
$\mathcal{M}_2$ (which drives the cross-perplexity denominator) should
respond similarly to two texts that convey the same meaning.

Combining with Theorem~\ref{thm:main}: the signal term satisfies
\[
\frac{K(M_2) - K(M_1)}{T \cdot D}
\;\geq\;
\frac{K(P) - O(\log n)}{T \cdot D},
\]
so the Quadroculars score is bounded below by a quantity proportional to
the payload complexity per token, divided by the baseline
cross-perplexity.  This provides a theoretical floor on the
detectability of steganographic embedding via the Binoculars score.
\section{Experimental Validation}
\label{sec:experiments}

We validate the theoretical prediction using a concrete LLM steganographic scheme.

\subsection{Synesthetic Encoding}

The scheme embeds 3-bit color codes per sentence.
The binary payload $P$ is segmented into 3-bit chunks, each mapped to one of 8 colors (\texttt{red}, \texttt{yellow}, \texttt{green}, \texttt{blue}, \texttt{purple}, \texttt{brown}, \texttt{black}, \texttt{white}), yielding
\[
\log_2 8 = 3 \text{ bits per sentence.}
\]
An LLM rewrites each covertext sentence so that it evokes the assigned color through stereotypically associated nouns, without naming the color explicitly.
A separate LLM decodes each stegotext sentence by classifying its dominant color association.

\paragraph{Example.}
Consider the covertext sentence ``The project was finally completed after months of effort'' with assigned color \textcolor{red}{red}.
The encoder rewrites it as ``The project was finally completed after months of \textcolor{red}{bloodshed} and effort,'' embedding a red-associated noun without naming the color.  The decoder, presented with the stegotext in isolation, identifies \textcolor{red}{bloodshed} as the dominant chromatic cue and outputs \texttt{red}, recovering the 3-bit payload $P = 000$.

\subsection{Dataset and Accuracy}

We construct 300 entries by independently sampling $N \in \{1,\ldots,6\}$ sentences from a corpus of \emph{Three-Body Problem} excerpts and $N$ colors uniformly from the palette, then pairing them randomly.
All encoding uses \texttt{claude-sonnet-4-20250514}.

Direct decoding achieves $95.7\%$ per-sentence accuracy (Figure~\ref{fig:color_accuracy}).
An adversarial LLM-based paraphraser reduces accuracy to $54.8\%$---still $4.4\times$ above the $12.5\%$ chance baseline.
Figure~\ref{fig:confusion} shows the full confusion matrices; paraphrasing introduces a systematic bias toward \texttt{black}.

Following Roger and Greenblatt~\cite{Roger2023}, effective channel capacity per sentence is
\[
C_{\mathrm{eff}} \;=\; n_b - f\log_2\!\left(\frac{1}{f}\right) - (1-f)\log_2\!\left(\frac{2^{n_b}-1}{1-f}\right),
\]
where $n_b$ is attempted number of bits encoded per kB and $f$ is the per-sentence perfect-decoding frequency.
This yields approximately $39\;\text{bits/kB}$ for encoded text and ${\sim}\,14\;\text{bits/kB}$ after paraphrasing.

\begin{figure}[t]
  \centering
  \includegraphics[width=\columnwidth]{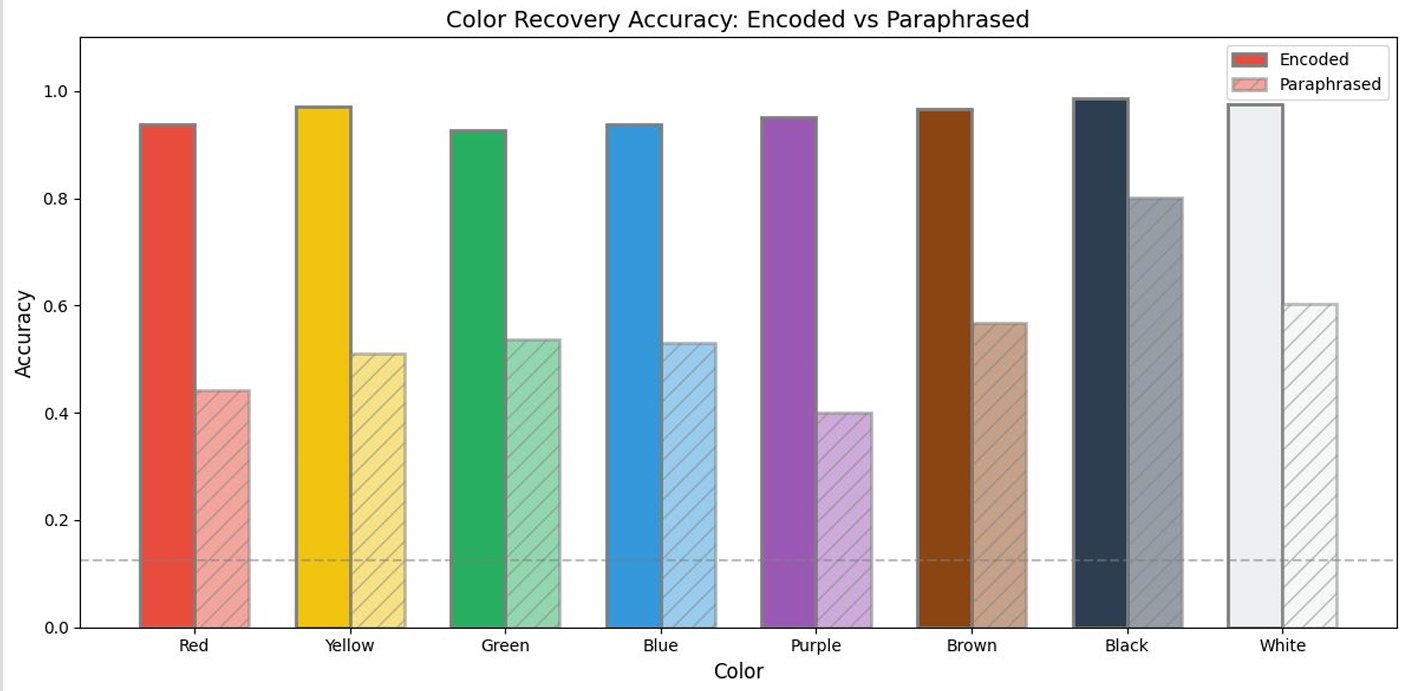}
  \caption{Per-color decoding accuracy for encoded (solid) vs.\ paraphrased (hatched) text. Dashed line: chance level ($12.5\%$).}
  \label{fig:color_accuracy}
\end{figure}

\begin{figure*}[t]
  \centering
  \includegraphics[width=\textwidth]{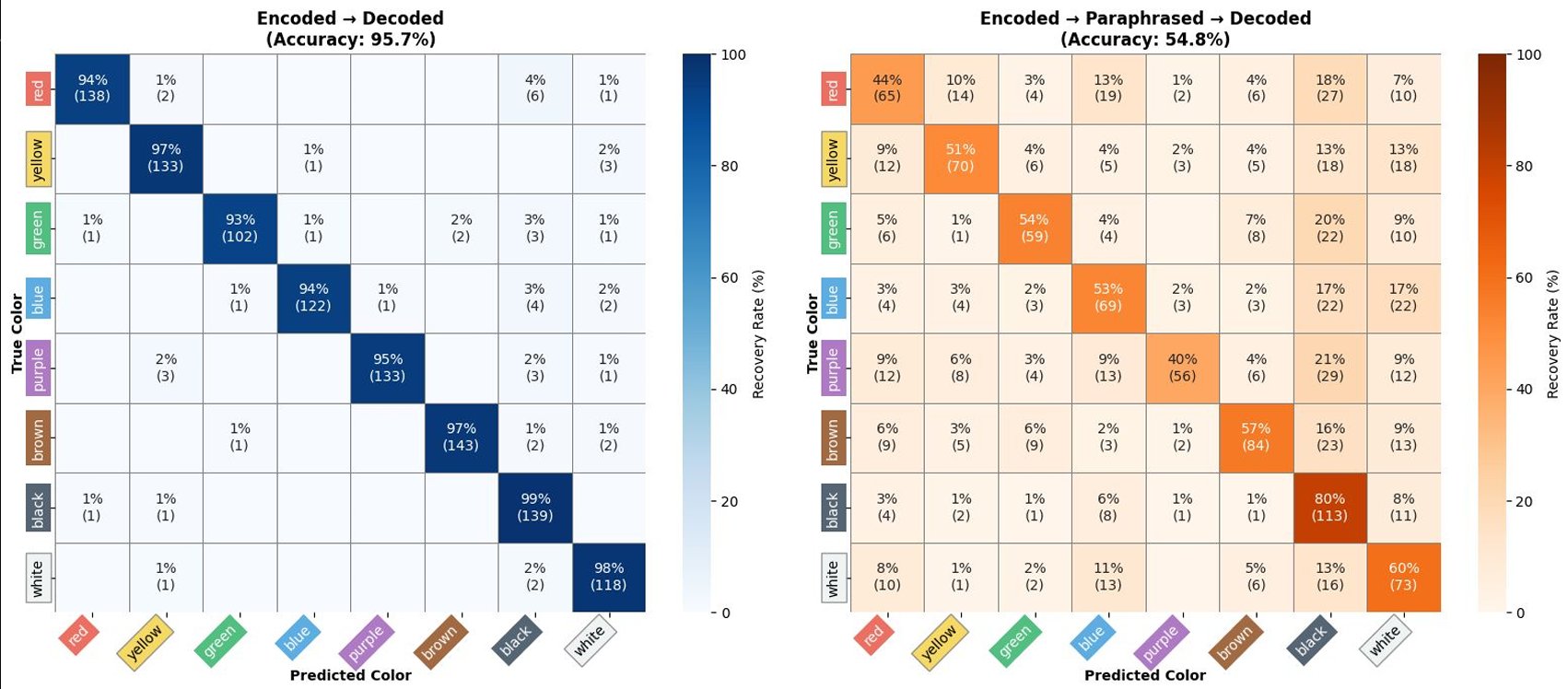}
  \caption{Confusion matrices. Left: direct decoding ($95.7\%$). Right: after adversarial paraphrasing ($54.8\%$). Paraphrasing introduces a strong bias toward \texttt{black}.}
  \label{fig:confusion}
\end{figure*}

\subsection{Binoculars Score Validation}

We compute Binoculars scores for all 300 encoded entries and their paraphrased counterparts (Table~\ref{tab:binoculars}).
Encoded text has mean score $1.649$ (std $1.230$); paraphrased text scores $1.338$ (std $0.728$).
A paired $t$-test yields
\[
t = 5.11, \quad p = 5.72 \times 10^{-7},
\]
and the shift direction is consistent with the theoretical prediction: encoding increases complexity, and paraphrasing (which strips the payload) decreases it.
Figure~\ref{fig:binoculars} shows the score distributions; Figure~\ref{fig:paired_diff} shows the paired differences, with mean $-0.310$ and $63.3\%$ of pairs exhibiting a decrease after paraphrasing.

These results converge with independent findings that acrostic ciphers also significantly increase Binoculars scores (Cohen's $d = -2.23$, $p < 0.0001$)~\cite{Shportko2025}.

\begin{table}[t]
\centering
\caption{Binoculars score statistics.}
\label{tab:binoculars}
\small
\begin{tabular}{@{}lcccc@{}}
\toprule
\textbf{Condition} & \textbf{Mean} & \textbf{Std} & \textbf{Min} & \textbf{Max} \\
\midrule
Encoded     & 1.649 & 1.230 & 0.443 & 14.649 \\
Paraphrased & 1.338 & 0.728 & 0.380 & 5.595 \\
\midrule
\multicolumn{5}{@{}l}{\small Paired $t$: $t\!=\!5.11$, $p\!=\!5.72\!\times\!10^{-7}$; mean shift $-0.310$; $63.3\%$ pairs decrease.} \\
\bottomrule
\end{tabular}
\end{table}

\begin{figure}[t]
  \centering
  \includegraphics[width=\columnwidth]{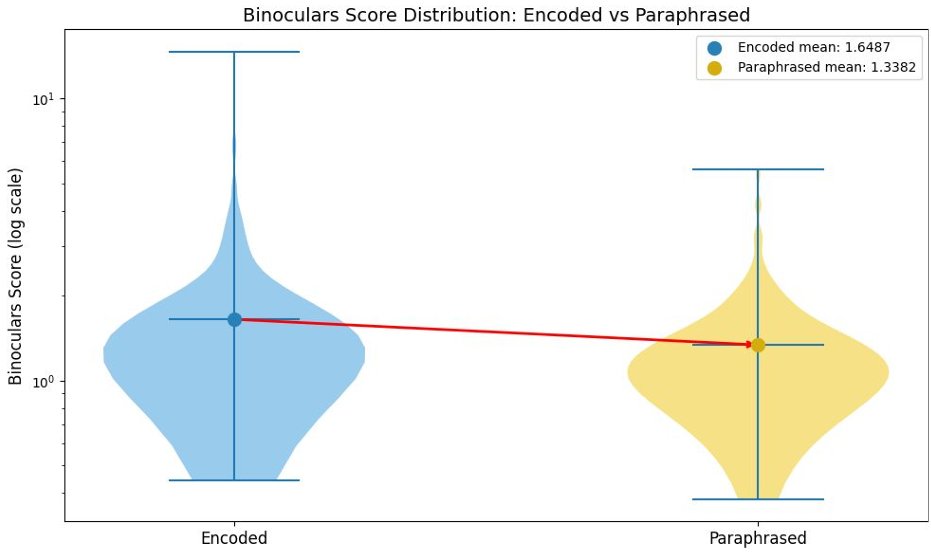}
  \caption{Binoculars score distributions for encoded vs.\ paraphrased text. Encoding significantly increases scores, consistent with Theorem~\ref{thm:main}.}
  \label{fig:binoculars}
\end{figure}

\begin{figure}[t]
  \centering
  \includegraphics[width=\columnwidth]{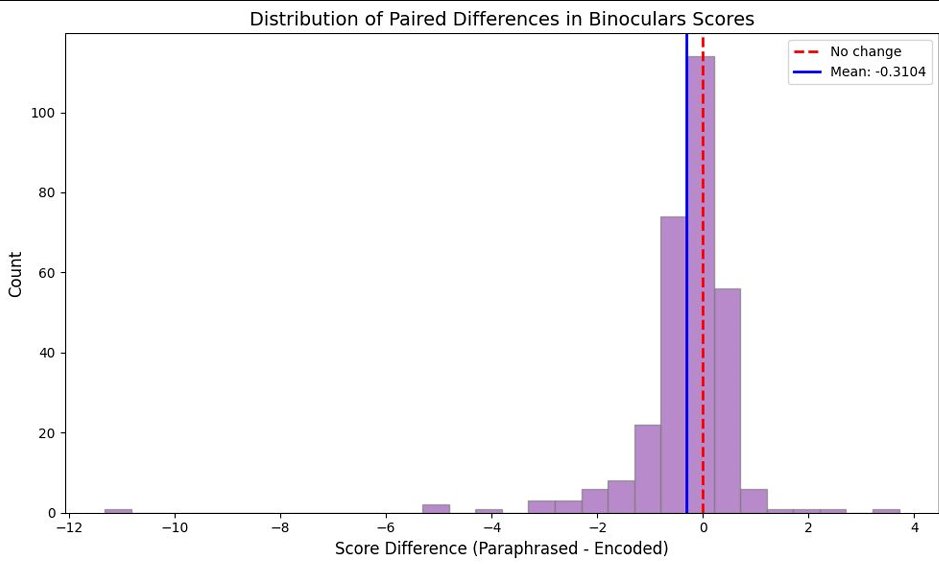}
  \caption{Paired Binoculars score differences (paraphrased $-$ encoded). The negative mean ($-0.310$) confirms that encoding increases complexity.}
  \label{fig:paired_diff}
\end{figure}

\section{Discussion and Conclusion}
\label{sec:conclusion}

Theorem~\ref{thm:main} establishes a ``no free lunch'' principle for LLM steganography: semantic-preserving embedding must increase Kolmogorov complexity by at least the payload complexity.
Combined with Ryabko and Ryabko's~\cite{Ryabko2009} bound on encoder complexity, this implies that steganography incurs fundamental costs at both the system and the output levels.
The result applies to any semantic-preserving embedding---including watermarking schemes that modify LLM output distributions at generation time---and places an information-theoretic floor on the detectability of such modifications.

Our proxy argument rests on a chain of established correspondences: compressors approximate $K(x)$ from above~\cite{Li1997,Cilibrasi2005}, language-model cross-entropy approximates the entropy rate from above~\cite{Shannon1951,Cover2006}, and the Binoculars score captures relative surprisal between two models.
The empirical success of this proxy ($p < 10^{-6}$) provides evidence that the chain is tight enough to detect the $K(P)$ signal predicted by Theorem~\ref{thm:main}.
A rigorous bound relating Binoculars score differences to Kolmogorov complexity differences---as a function of language-model quality---remains an open problem.

\textbf{Limitations.}
Our semantic-load definition (Definition~\ref{def:semantic}) assumes $K(M_i \mid S) = O(1)$, which idealizes the inherently fuzzy notion of natural-language equivalence.
The bounds are asymptotic, and hidden constants may be substantial for finite instances.
A single LLM family handles encoding, decoding, and paraphrasing in our experiments, which may overstate effectiveness against heterogeneous adversaries.
The Binoculars shift, while statistically significant, is moderate in absolute terms (mean $0.310$), and single-threshold detection would have non-trivial error rates.

\textbf{Future work.}
Extensions include formalizing the gap between perplexity-based proxies and $K$-complexity differences, extending the bounds to probabilistic embeddings with approximate semantic preservation, and building multi-feature steganalysis systems that combine perplexity with semantic-coherence metrics.


\end{document}